\newtheorem{theorem}{Theorem}
\newtheorem{definition}{Definition}
\definecolor{mypink1}{rgb}{0.858, 0.188, 0.478}
\definecolor{ao(english)}{rgb}{0.0, 0.5, 0.0}
\definecolor{ballblue}{rgb}{0.13, 0.67, 0.8}
\definecolor{blueviolet}{rgb}{0.54, 0.17, 0.89}
\definecolor{brickred}{rgb}{0.8, 0.25, 0.33}
\definecolor{burntorange}{rgb}{0.8, 0.33, 0.0}
\newcommand{\ouracronym}[0]{IBIT\xspace}
\title{\LARGE \bf
Intervention Design for Effective Sim2Real Transfer
}
\author{Melissa Mozifian$^{1}$, Amy Zhang$^{12}$,  Joelle Pineau$^{12}$, and David Meger$^{1}$ 
\thanks{$^{1}$ Montreal Institute of Learning Algorithms (MILA) and McGill University. $^{2}$ Facebook AI Research.
Correspondence to: Melissa Mozifian {\tt\small melissa.mozifian@mail.mcgill.ca}}%
}
\begin{document}

\maketitle
\thispagestyle{empty}
\pagestyle{empty}

\begin{abstract}
The goal of this work is to address the recent success of domain randomization and data augmentation for the sim2real setting. We explain this success through the lens of causal inference, positioning domain randomization and data augmentation as interventions on the environment which encourage invariance to irrelevant features.
Such interventions include visual perturbations that have no effect on reward and dynamics. This encourages the learning algorithm to be robust to these types of variations and learn to attend to the true causal mechanisms for solving the task. This connection leads to two key findings: (1) perturbations to the environment do not have to be realistic, but merely show variation along dimensions that also vary in the real world, and (2) use of an explicit invariance-inducing objective improves generalization in sim2sim and sim2real transfer settings over just data augmentation or domain randomization alone.
We demonstrate the capability of our method by performing zero-shot transfer of a robot arm reach task on a 7DoF Jaco arm learning from pixel observations.
\end{abstract}

\section{INTRODUCTION}
The use of simulation for robot training has many advantages, in terms of speed and cost.  But simulation-based training can be brittle, and lead to poor results when tranferred to the physical robot.  This can be alleviated substantially by sampling the domain randomization parameters to \emph{sufficiently cover} a range of parameters. Selecting which aspects of the simulation to vary is often left to human design. However, negative transfer (poor performance in the real world) can arise perturbing irrelevant factors, or even varying the right factors by too much or by too little. 

This paper provides a deeper understanding of what types of interventions are effective for good generalization.
We introduce the use of causal inference (CI) as both an analytical method to determine what randomizations (interventions in the CI terminology) are useful, and to provide powerful learning tools that automatically pay attention to relevant variations, while ignoring spurious correlations among the random factors.
Our approach further shows that the types of data augmentation and domain randomization used do not have to be physically realistic, but must be used on components of the environment that truly vary without affecting the dynamics or reward function. This enables us to make as few assumptions as possible about the features of the real world, as long as training in simulation captures the fundamental visual features required in order to solve the task at hand.

Recent connections between state abstractions, specifically bisimulation \cite{ferns2011contbisim} [\cref{sec:bisim}], and causal inference~\cite{zhang2020invariant}, help determine the difference between modifications to the environment that permit \textit{positive}, as opposed to negative, transfer.
If the true underlying causal model of the generated data is known, this knowledge can be leveraged to design data augmentation techniques appropriate to the task~\cite{peters2016icp}. 
Recent results in the model-free regime to solve robotics tasks have shown significant gains through the use of data augmentation~\cite{kostrikov2020drq,laskin2020rad} in the single-task setting. However, current deep RL models trained with observational data and the principle of empirical risk minimization~\cite{NIPS1991_506} fail to generalize to unseen domains. Domain randomization and data augmentation methods do not explicitly take spurious correlations into account. In more practical applications such as robotics, data augmentation techniques are merely heuristics to enable learning invariant features.

In this work, we show how domain randomization combined with an invariance objective can weaken the spurious correlations in observed domains, enabling RL agents to generalize to unseen domains.
Furthermore, we show what assumptions are needed to learn a minimal, causal representation of an environment and connect these findings back to explain the success of domain randomization and data augmentation. To bridge the gap between theory and practice, we explain why this method works well for sim2real transfer and show that the incorporation of an additional invariance objective further improves generalization performance.
We introduce a practical method, \textbf{I}ntervention-\textbf{b}ased \textbf{I}nvariant \textbf{T}ransfer learning (\ouracronym), that uses both data augmentation and domain randomization in a multi-task setting, additionally enforcing invariance across domains to obtain sample-efficient transfer in sim2sim~\cite{yu2019metaworld} and sim2real settings. This real-world transfer experiment for a robot manipulation task is, to our knowledge, the first demonstration of generalization capability of current state-of-the-art pixel-based control methods in deep RL. For sample videos and link to the code base see \mbox{\url{https://sites.google.com/view/ibit}}.

\begin{figure}[bh]
    \centering
    \includegraphics[width=.85\linewidth]{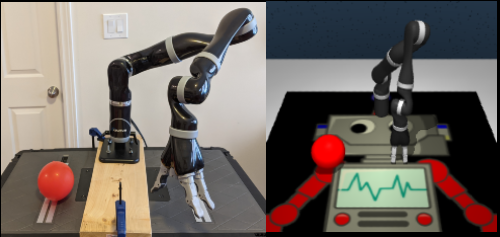}
    \caption{\small Sim2real Jaco robot arm setup.}
    \vspace{-10pt}
    \label{fig:robot_teaser}
\end{figure}


\section{TECHNICAL BACKGROUND}
\label{sec:bg}
In this section, we introduce assumptions about the environment, defined as latent structure in Markov Decision Processes (MDPs), the transfer learning setting, and concepts from causal inference and state abstraction theory that are leveraged in this work.

\subsection{Structured MDPs}
\label{sec:bg_mdp} 

We assume the domains can be described as \textit{Markov Decision Processes}, defined by tuple $\langle \mathcal{S}, \mathcal{A}, P, R \rangle$ with state space $\mathcal{S}$ and action space $\mathcal{A}$. $P$ denotes the latent transition distribution $P(s'|s,a)$ for $s,s'\in\mathcal{S}, a\in\mathcal{A}$, and $R(s,a)$ the reward function. 
To address visual differences, we assume the true state space is latent, and the agent instead receives information  from an additional observation space, $\mathcal{X}$.  We additionally define an emission mapping $q$, which denotes the mapping from state to observation. Data augmentation and domain randomization can be framed as interventions on this mapping $q$. These changes across domains can be denoted by a domain-specific mapping $q_d$. 


\subsection{Domain  Randomization and Data Augmentation}
Domain randomization is an approach where one tries to find a representation that generalizes across different environments, called \textit{domains}.
\citet{tobin2017domain} introduced domain randomization which randomizes the rendering configuration in the simulator and with enough variability in the simulator, the real world could appear, as just another variation. 
In this setting, we call an environment that we have full access to a source domain and the environment that we want to transfer the model to, a target domain. Training proceeds in the source domain and a set of $N$ randomization parameters are used to expose the policy to a variety of environments which help it to generalize. In this work we only consider domain randomizations that affect the mapping $q$.

Data augmentation is a very similar framework, but limited to changes to the observation that can be made after the mapping $q$. Data augmentation has been found to improve generalization performance in supervised learning settings~\cite{volpi2018da} and RL settings where access to the simulator engine is not assumed~\cite{kostrikov2020drq,srinivas2020curl,laskin2020rad}.

\subsection{Causal Discovery  \& Invariant Prediction}
Causal discovery aims to find causal relations from data. We assume a \textit{Structural  Causal Model}~\cite{pearl2009do}, or a latent DAG (Directed  Acyclic Graph) structure underlying the data generation, which  differs from conditional observational dependence between variables.  
Such a model will reflect on the underlying `true' causal structure of domain generalization problem.
However, causal structure cannot be found from purely observational data~\cite{peters2016icp}. \citet{peters2016icp} details the types of \textit{interventions} necessary to find the correct causal graph structure. Under these interventions,  
invariance can be used as a proxy for causal relationships, as those causal relationships will stay constant under interventions. 
Based on this insight, \citet{peters2016icp} introduce an algorithm, Invariant Causal  Prediction (ICP), to discover causal graph structure. However, it  relies on access to the variables and is super-exponential  in the number of variables.
More recently, \citet{arjovsky2019irm,krueger2020rex} propose gradient-based methods for learning invariant representations. 
A form for learning invariant representations in MDPs was introduced by \cite{zhang2020invariant}. We extend this work to the sim2real setting, and provide guidelines as to the types of interventions necessary to achieve the necessary generalization. The types of interventions relevant to the sim2real setting can be found in \cref{sec:interventions}. 

\subsection{State Abstractions}
\label{sec:bisim} 
\textbf{Bisimulation} is a form of state abstraction that groups states $s_i$ and $s_j$ that are ``behaviorally equivalent''~\cite{li2006stateabs}. For any action sequence $a_{0:\infty}$, the probabilistic sequence of rewards from $s_i$ and $s_j$ are identical.
A more compact definition has a recursive form: two states are bisimilar if they share both the same immediate reward and equivalent distributions over the next bisimilar states~\cite{larsen1989bisim,Givan2003EquivalenceNA}. 

\begin{definition}[Bisimulation Relations~\cite{Givan2003EquivalenceNA}]
Given an MDP $\mathcal{M}$, an equivalence relation $B$ between states is a bisimulation relation if, for all states $s_i,s_j\in\mathcal{S}$ that are equivalent under $B$ (denoted $s_i\equiv_Bs_j$) the following conditions hold:
\begin{alignat}{2}
    \mathcal{R}(s_i,a)&\;=\;\mathcal{R}(s_j,a) 
    &&\quad \forall a\in\mathcal{A}, \label{eq:bisim-discrete-r} \\
    \mathcal{P}(G|s_i,a)&\;=\;\mathcal{P}(G|s_j,a) 
    &&\quad \forall a\in\mathcal{A}, \quad \forall G\in\mathcal{S}_B, \label{eq:bisim-discrete-p}
\end{alignat}
where $\mathcal{S}_B$ is the partition of $\mathcal{S}$ under the relation $B$ (the set of all groups $G$ of equivalent states), and $\mathcal{P}(G|s,a)=\sum_{s'\in G}\mathcal{P}(s'|s,a).$
\end{definition}

Exact partitioning with bisimulation relations is generally impractical in continuous state spaces, as the relation is highly sensitive to infinitesimal changes in the reward function or dynamics. For this reason, \textbf{Bisimulation Metrics}~\cite{ferns2011contbisim,ferns2014bisim_metrics} instead defines a pseudometric space $(\mathcal{S}, d)$, where a distance function $d:\mathcal{S}\times\mathcal{S}\mapsto\mathbb{R}_{\geq 0}$ measures the ``behavioral similarity'' between two states\footnote{Note that $d$ is a pseudometric, meaning the distance between two different states can be zero, corresponding to behavioral equivalence.}.
The bisimulation metric is the reward difference added to the Wasserstein distance between transition distributions:
\begin{definition}[Bisimulation Metric]
\label{def:bisim_metric}
From Theorem 2.6 in \cite{ferns2011contbisim} with $c\in[0,1)$:
\begin{align}
d(s_i,s_j) 
\;&=\; \max_{a\in \mathcal{A}}\;
(1-c)\cdot |\mathcal{R}_{s_i}^a - \mathcal{R}_{s_j}^a| + c\cdot W_1(\mathcal{P}_{s_i}^a,\mathcal{P}_{s_j}^a;d).
\label{eq:bisim_metric}
\end{align}
\end{definition}

\citet{zhang2020invariant} first drew connections between bisimulation and invariant prediction. Here we show how to use bisimulation metrics to find the coarsest bisimulation partition as a way to enforce invariance across interventions.




\begin{figure}[t!]
 \centering
 \begin{subfigure}[t]{0.78\linewidth}
    \includegraphics[width=1\linewidth,trim=0 0 610 0, clip]{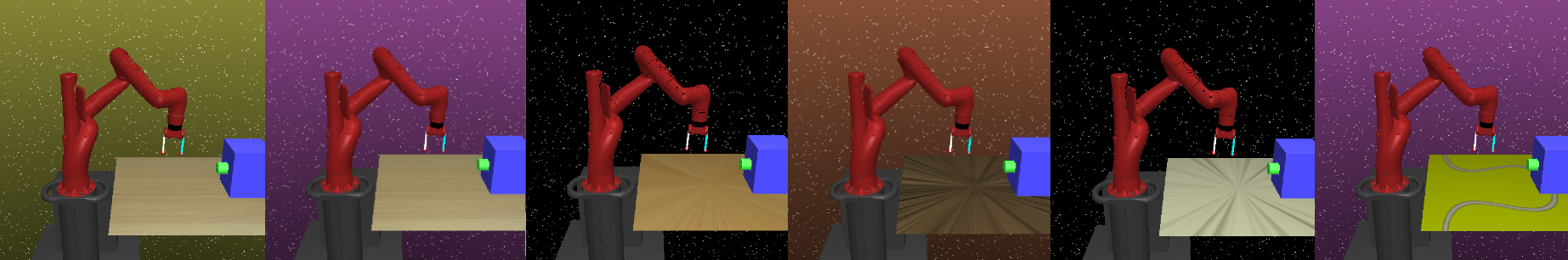}
    \centering
    \label{fig:buttonpress_train}
    \vspace{-10pt}
  \end{subfigure}
  \begin{subfigure}[t]{0.2\linewidth}
    \includegraphics[width=.97\linewidth]{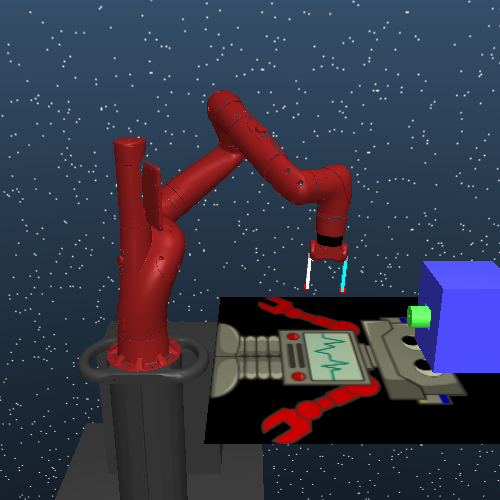}
    \centering
    \label{fig:buttonpress_eval}
        \vspace{-10pt}
  \end{subfigure}
   \begin{subfigure}[t]{0.78\linewidth}
    \includegraphics[width=1\linewidth,trim=0 0 610 0, clip]{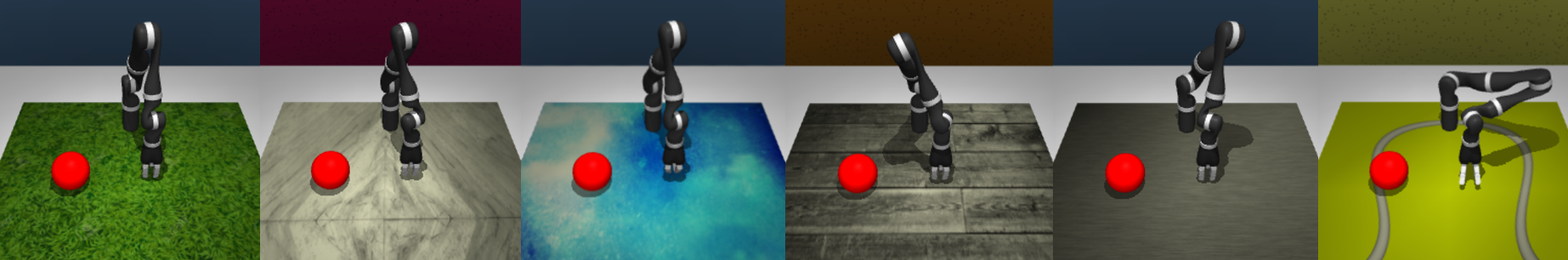}
    \centering
    \caption{\small Training envs with randomized visual rendering.}
    \label{fig:jaco_train}
  \end{subfigure}
  \begin{subfigure}[t]{0.2\linewidth}
    \includegraphics[width=.94\linewidth]{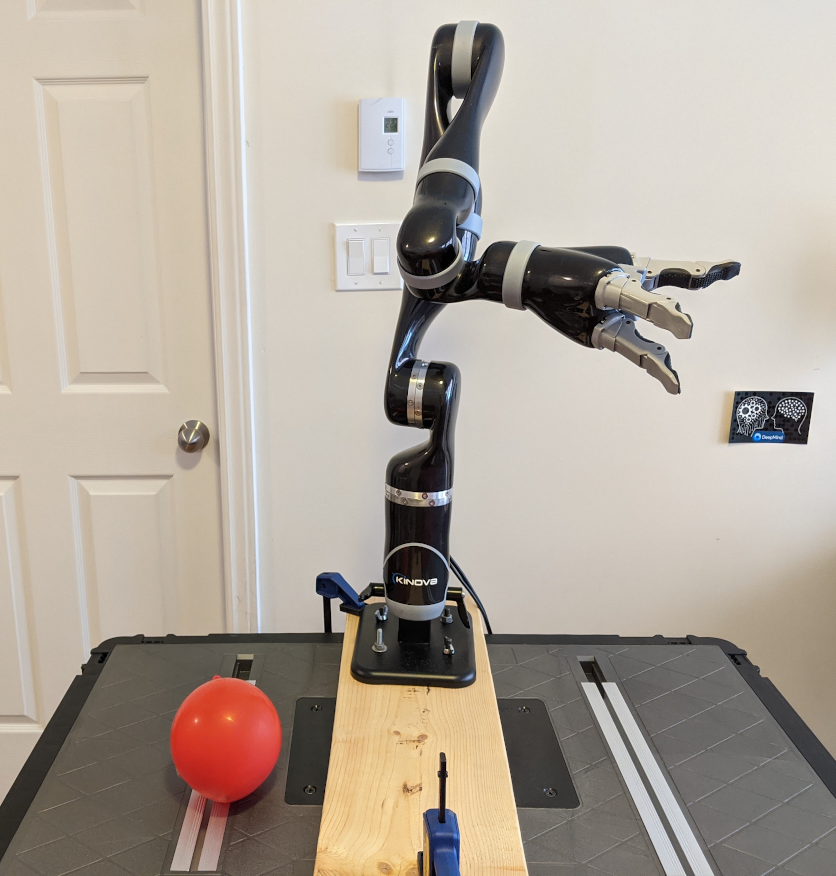}
    \centering
    \caption{\small Test time.}
    \label{fig:jaco_eval}
  \end{subfigure}
  \caption{\small Training and evaluation settings. At training time, the agent is exposed to variable changes such as background and table colour while relevant visual features such as the arm itself and target are kept constant. At test time, the agent is evaluated on \textit{new unseen} environments.}
  \vspace{-20pt}
  \label{fig:setup}
\end{figure}

\section{Problem Setting} 
\label{sec:problem_setting}
We assume access to a simulator where we can modify the rendering function. These modifications give rise to different environments, which we treat as a multi-task setting. This requires access to an environment id that allows the agent to determine when it has changed environments, but this assumption can also be relaxed, as discussed in \cref{sec:methods}. 

\subsection{Forms of Interventions}
\label{sec:interventions}
We posit two forms of possible interventions in the simulation setting that are relevant for sim2real. 
The first is \textit{post-rendering interventions}, one form of which is data augmentation, or an intervention staged on the observation $x$ after it is rendered from the latent state  $s$. This has been shown to be useful in RL settings~\cite{laskin2020rad,kostrikov2020drq}, and is the most versatile in that it doesn't require access to the simulation engine or emission function. Examples of this type of intervention include crops, flips, or rotations. Specifically, it refers to any reasonable transformations that can be done to the observation, after it is rendered. Such transformations should be selected in a way that does not hinder the RL reward function. For instance, consider a reacher task where the goal of the agent is to reach a red goal. If the transformation alters the goal colour in any way, this can cause inconsistency to the observation and the reward function.

The second type of intervention we consider is \textit{rendering interventions}, that is, modifying the environment or simulator to intervene directly on the emission function $q_d:\mathcal{S}\mapsto \mathcal{X}$. This would also modify only the observation and not the underlying dynamics or reward function of the MDP, but generally requires access to the rendering engine in practice. Examples  of this type of intervention include different camera angles or colour or texture changes of the background or objects in the scene. Some forms of this don't require access to the rendering engine with inductive bias and image processing techniques, like interventions described in \cite{azhang2018natrl} of injecting natural video into backgrounds of simulators. 



\subsection{Multi-Task Setting}
We treat interventions as separate tasks, and use the goal of learning an invariant representation across tasks to generalize to unseen, new tasks in a zero-shot manner.
We assume access to $N$ different domains at training time. For each domain, an intervention tag is assigned which indicates the type of intervention, i.e. augmentation to the domain.
In order to test the ability of the model to generalize, we evaluate in a previously unseen test domain $d = N + 1$.




\section{CONNECTIONS TO CAUSAL INFERENCE}

In this section we present results on the range of interventions needed to guarantee generalization, as well as the restrictions on those interventions to guarantee they will always help transfer rather than harm it.
\begin{theorem}[Identifiability]
\label{thm:identifiability}
We assume underlying variables $\{X_1,X_2,...,X_n\}$ make up the state space $\mathcal{S}$ of MDP $\mathcal{M}$, with a linear Gaussian relationship with the reward. If at training time we have access to an intervention on each variable $X_i,i\in\{1,...,n\}$, the causal parents of the reward, $PA(R)$, are identifiable.
\end{theorem}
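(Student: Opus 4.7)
The plan is to follow the Invariant Causal Prediction (ICP) strategy of \citet{peters2016icp}, specialized to the linear Gaussian reward model. By assumption, we may write $R = \sum_{i \in PA(R)} \beta_i X_i + \epsilon$ with $\epsilon \sim \mathcal{N}(0,\sigma^2)$ independent of $X_{PA(R)}$. The central observation is that this structural equation is invariant under any intervention on some $X_j$ with $j\in\{1,\ldots,n\}$: the intervention changes the marginal of $X_j$ (and downstream variables) but leaves the mechanism $R\mid X_{PA(R)}$ untouched. Hence the conditional $P^e(R\mid X_{PA(R)})$, including both the regression vector $\beta_{PA(R)}$ and the residual variance $\sigma^2$, is the same in every intervention environment $e$.

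Given this, I would identify $PA(R)$ as the unique subset $S\subseteq\{1,\ldots,n\}$ for which $P^e(R\mid X_S)$ is invariant across all $n$ intervention environments. Formally, for a candidate $S$, let $R = (\gamma_S^e)^\top X_S + \eta_S^e$ denote the environment-$e$ population regression, and call $S$ invariant if $\gamma_S^e$ and the law of $\eta_S^e$ do not depend on $e$. Step one is to note that $PA(R)$ is invariant, as above. Step two is to rule out any other $S$: if $S$ omits a parent $j\in PA(R)$, the regression onto $X_S$ absorbs $X_j$ through a term depending on $\mathbb{E}[X_j\mid X_S]$, and the intervention on $X_j$ shifts this conditional expectation, breaking invariance of either $\gamma_S^e$ or the residual law; if $S$ contains a non-parent $k\notin PA(R)$, an intervention on $k$ perturbs $X_k$ in a way uncorrelated with $R$ given $PA(R)$, which under the Gaussian moment structure shows up as a change in the projected coefficient or in the residual variance. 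Taking the intersection of all invariant sets then yields $PA(R)$ uniquely.

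The main obstacle is the completeness direction: guaranteeing that no non-causal $S$ is accidentally invariant. This is exactly where the hypothesis of ``an intervention on each variable $X_i$'' is used. I would encode this as a general-position condition on the interventions, namely that the matrix of per-environment mean and covariance shifts induced on $(X_1,\ldots,X_n)$ has full rank $n$, so that no conspiratorial cancellation can align a wrong set $S$ with the true parent regression across all $n$ environments. Under this non-degeneracy, which is generically satisfied when each variable is independently perturbed, the two-step argument above produces $PA(R)$ as the unique invariant set, giving identifiability. The linear Gaussian assumption is what makes the argument purely algebraic, since invariance of $P^e(R\mid X_S)$ reduces to equality of regression coefficients and residual variances rather than of entire conditional densities.
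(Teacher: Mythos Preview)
Your approach is essentially identical to the paper's: the paper's entire proof is the single line ``The proof follows directly from Theorem~3 in \citet{peters2016icp},'' and your proposal is a faithful unpacking of precisely that ICP argument specialized to the linear Gaussian reward model. There is nothing substantively different to compare.
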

The proof follows directly from Theorem 3 in \citet{peters2016icp}. 
\cref{thm:identifiability} gives us the requirements necessary to obtain desired generalization in sim2real settings. We can define and intervene on specific variables we know can vary across simulation and real settings, e.g. table top color and texture, background, and/or mass of objects. Further, these interventions don't need to be ``realistic". We can use a texture never seen at evaluation time, as shown in \cref{fig:setup}, or an extreme friction coefficient never seen in the family of real tasks. Rather, what is important is the presence of the variation itself across training tasks.

\begin{definition}[Valid interventions]
\label{def:valid}
An intervention is \textit{valid} if it positively affects generalization performance.
\end{definition}
\cref{def:valid} defines the type of intervention we want to design. The below result gives guidance on how to find valid interventions.
\begin{theorem}[Bisimulation validity]
\label{thm:bisim_valid}
All interventions that result in an MDP that is bisimilar to the original are valid.
\end{theorem}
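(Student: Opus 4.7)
The plan is to establish validity of a bisimilarity-preserving intervention by showing that learning on the intervened MDP yields the same optimal policy as learning on the original, and that including both as training domains only strengthens the invariance of the learned representation without corrupting its reward signal. First I would fix notation: let $\mathcal{M} = \langle \mathcal{S}, \mathcal{A}, P, R\rangle$ be the source MDP and $\mathcal{M}' = \langle \mathcal{S}', \mathcal{A}, P', R'\rangle$ the MDP induced by the intervention, together with a bisimulation relation $B$ between $\mathcal{S}$ and (the disjoint union of) $\mathcal{S}'$. By the definition of bisimulation relations, equivalent states share rewards and transition mass over equivalence classes for every action; by \cref{def:bisim_metric} this is equivalent to $d(s,s') = 0$ whenever $s \equiv_B s'$.

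The core step is to invoke bisimulation--value equivalence: if $s \equiv_B s'$, then $V^*_{\mathcal{M}}(s) = V^*_{\mathcal{M}'}(s')$ and $Q^*_{\mathcal{M}}(s,a) = Q^*_{\mathcal{M}'}(s',a)$ for all $a\in\mathcal{A}$. This is standard and follows from a fixed-point argument on the Bellman optimality operator restricted to the quotient space $\mathcal{S}_B$: both $V^*_{\mathcal{M}}$ and $V^*_{\mathcal{M}'}$, viewed as functions on $\mathcal{S}_B$, satisfy the same $\gamma$-contraction with the same data, so they coincide. Equivalently, the bisimulation-metric bound $|V^*(s) - V^*(s')| \le d(s,s')/(1-c)$ of Ferns et al.\ collapses to zero on bisimilar pairs.

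Next I would lift this equivalence from values to policies and representations. Since $Q^*$ agrees on bisimilar states, any greedy policy computed on the quotient of $\mathcal{M}$ is also optimal in $\mathcal{M}'$. In the pixel-based setting considered in \cref{sec:problem_setting}, a learned encoder $\phi$ whose latent distance tracks the bisimulation metric assigns the same embedding to bisimilar observations across $\mathcal{M}$ and $\mathcal{M}'$, so the policy head trained in $\mathcal{M}$ evaluates correctly in $\mathcal{M}'$ in a zero-shot manner. Adding rollouts from $\mathcal{M}'$ to the training mixture therefore neither contradicts the reward-predictive target (rewards agree on equivalence classes) nor perturbs the invariant structure; it merely supplies additional pairs $(s,s')$ that reinforce the constraint $\|\phi(s) - \phi(s')\|=0$, so by \cref{def:valid} the intervention is valid.

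The main obstacle is pinning down ``positively affects generalization'' precisely enough that the claim is not circular. I would handle it by formalizing validity as: adding the intervened domain to the training mixture does not increase the empirical invariance loss on bisimilar pairs and does not move the learned $Q$-function away from $Q^*_{\mathcal{M}}$ on the shared quotient. Both items follow from the reward and transition-class equalities inherited from bisimulation, together with the contraction bound on the bisimulation metric; no additional hypotheses are required, which is exactly why bisimilarity is the natural sufficient condition singled out by the theorem.
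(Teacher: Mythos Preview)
Your argument is coherent but takes a route different from the paper's. The paper's proof is a two-line causal-inference reduction: it appeals to the ICP result of Peters et al.\ that an intervention is valid as long as it does not act on the target variable, identifies the target in the RL setting with reward and the dynamics governing future reward, and then invokes Theorem~1 of \cite{zhang2020invariant}, which equates the causal ancestors of reward with the coarsest bisimulation partition. Thus an intervention that leaves bisimilarity intact cannot have touched any causal ancestor of reward, and validity follows directly from the causal-inference criterion. Your approach instead works entirely inside bisimulation theory: you use value-function equivalence on bisimilar states (via the Ferns et al.\ bound or a contraction argument on the quotient) to argue that the intervened MDP shares $Q^*$ with the original, and then reason about the encoder and training mixture to conclude that including $\mathcal{M}'$ can only reinforce the invariance constraints. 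This is more self-contained and does not rely on the bisimulation--causal-feature-set correspondence from \cite{zhang2020invariant}, but it requires you to operationalize ``positively affects generalization'' yourself, and as you note your formalization (``does not increase invariance loss, does not move $Q$ away from $Q^*$'') establishes non-harm rather than strict positivity. The paper sidesteps that wrinkle by deferring to the ICP notion of validity, at the cost of importing two external results; your version trades that dependence for a longer, more explicit argument whose conclusion is slightly weaker than the informal definition asks for.
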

\begin{proof}
From the causal inference literature, an intervention is always valid unless it is intervening on the target variable~\cite{peters2016icp}, which in the RL setting pertains to the reward and dynamics of future reward. We use Theorem 1 from \citet{zhang2020invariant}, which proves that the causal ancestors of the reward are the causal feature set of an MDP, and also correspond to the coarsest bisimulation partition. Therefore, any intervention that generates an MDP that is bisimilar to the original is valid.
\end{proof}
The benefit of domain randomization techniques where the simulator is directly modified means it is easier to design valid interventions that do not violate \cref{def:valid}, which was an issue discussed by \citet{kostrikov2020drq} when performing post-rendering interventions.

\section{METHODS FOR LEARNING INVARIANCE OVER INTERVENTIONS}
\label{sec:methods}

\begin{figure}
    \centering
    \includegraphics[width=\linewidth]{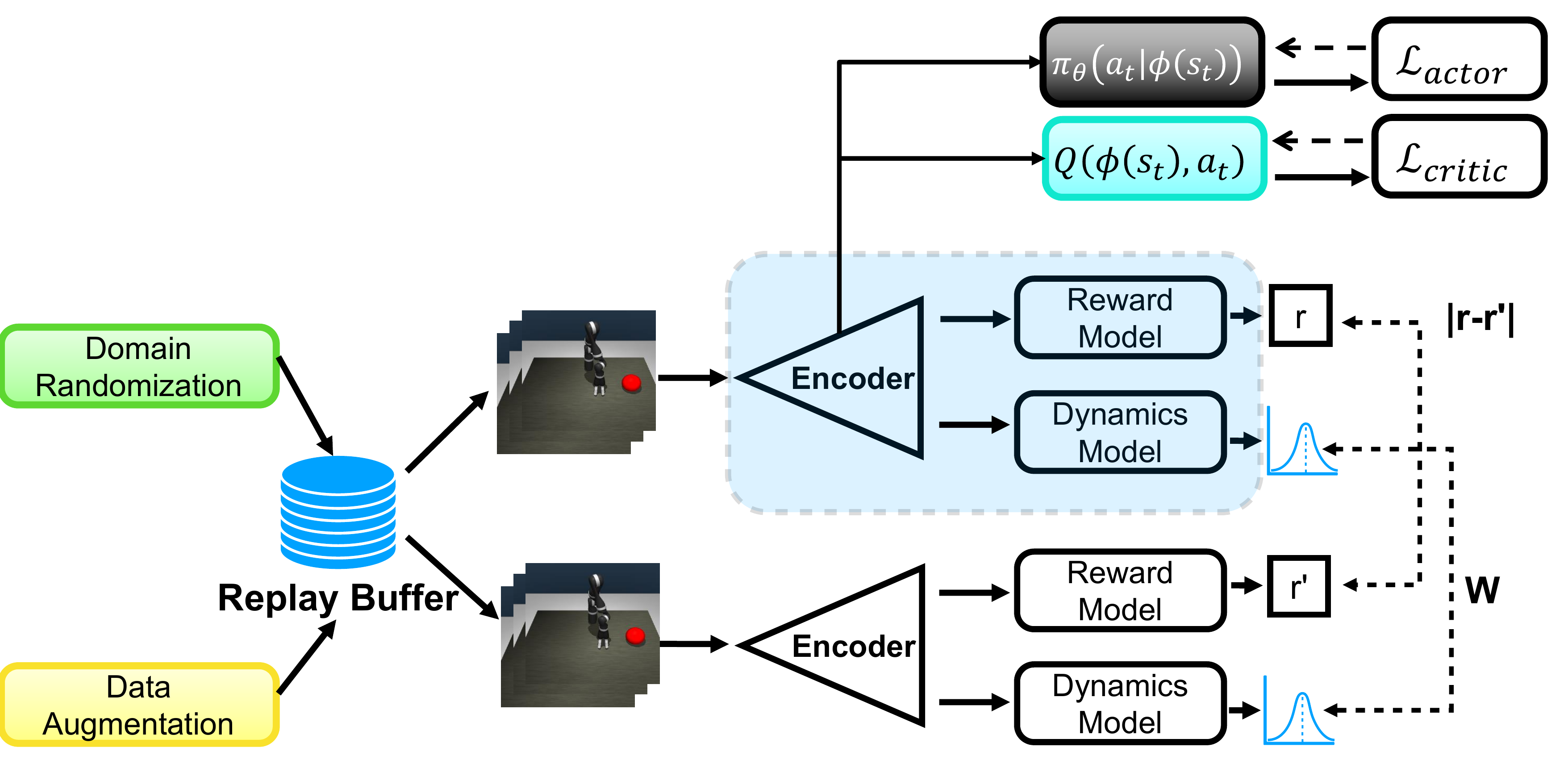}
    \caption{\small Flow diagram of our method, \ouracronym. Shaded in blue is the main model architecture, it is reused for both states, like a Siamese network.}
    \label{fig:architecture}
\end{figure}

Our goal is to leverage interventions, modeled as different domains in a multi-task setting, to learn invariant representations that generalize across simulation domains, and from simulation domains to real ones. We explore two approaches to invariance: 1) using \textit{bisimulation metrics} to learn the coarsest bisimulation partition, the fewest clusters of behaviorally equivalent states where similarity is measured by current and future reward, and 2) \textit{risk extrapolation}, a way to encourage robustness to affine combinations of tasks.

\setlength\textfloatsep{2pt}
\begin{algorithm}[t]
   \caption{\small \ouracronym and \ouracronym-REx}
   \label{alg:main_alg}
\begin{algorithmic}
    \FOR {Time $t = 0$ to $\infty $}
    \STATE Sample a batch of visually randomized environments
    \FOR {env in envs\_batch} 
    \STATE \textcolor{ao(english)}{Apply data augmentation to pixel frames}
    \STATE Encode observations $z_t = \phi(s_t)$
    \STATE Sample action from policy, $a_t \sim \pi(.|s_t)$
    \STATE Step in environment, $s_t' \sim p(.|s_t, a_t)$
    \STATE $\mathcal{D} \leftarrow \mathcal{D} \cup (s_t, a_t, r_t, s_t', \text{env\_tag})$
    \STATE     UpdateCritic($\mathcal{D}$) $\triangleright$ {\color{ao(english)}{Apply Data Augmentation Reg} }
    \STATE     UpdateActor($\mathcal{D}$)
    \STATE {Sample batch $\mathcal{B}_i \sim \mathcal{D}$}
    \STATE \textcolor{burntorange}{Permute batch randomly: $\mathcal{B}_j = permute(\mathcal{B}_i)$}
    \STATE {Train policy: $\mathbb{E}_{\mathcal{B}_i}[J(\pi)] $}
    \STATE \textcolor{burntorange}{Train encoder: $\mathbb{E}_{\mathcal{B}_i, \mathcal{B}_j}[J(\phi)] $}
    \STATE \textcolor{burntorange}{Train dynamics: $J(\hat{\mathcal{P}}, \phi) = (\hat{\mathcal{P}} (\phi(s_t),a_t) - \bar{z}_{t+1}  )^2 $}
    \small{\STATE \textcolor{burntorange}{Train reward: $ J(\hat{\mathcal{R}}, \hat{\mathcal{P}} , \phi ) = (\hat{\mathcal{R}}(\hat{\mathcal{P}} (\phi(s_t),a_t))-r_{t+1})^2$}}
    \IF{\textcolor{ballblue}{REx}}
    \STATE \textcolor{ballblue}{Apply penalty term $\mathcal{R}_\textrm{V-REx}$ from Eq. \ref{eqn:V-REx}}
    \ENDIF
    \ENDFOR
    \ENDFOR
\end{algorithmic}
\end{algorithm}

\subsection{Leveraging Bisimulation Metrics for Invariance}
To train the encoder $\phi$ towards the desired relation $d(x_i, x_j) := ||\phi(x_i) - \phi(x_j)||_1$, we draw batches of observation pairs and minimise the mean square error between the bisimulation metric and $\ell_1$ distance in latent space~\cite{zhang2020bisim}: 
\vspace{-5pt}
\begin{align}
    J(\phi) 
    &\;=\; \Big(||z_i - z_j||_1
    \;-\; |\hat{R}(\bar{z}_i) - \hat{R}(\bar{z}_j)| \\
    &\;-\;  \gamma\cdot W_2\big(\hat{P}(\cdot|\bar{z}_i,\bar{\pi}(\bar{z}_i)),\, \hat{P}(\cdot|\bar{z}_j,\bar{\pi}(\bar{z}_j))\big)\Big)^2, \nonumber
    \label{eq:bisim_loss}
\end{align}
This bisimulation metrics objective for learning the coarsest bisimulation partition under interventions is our base method, \ouracronym. Note that this invariance-inducing objective does \textit{not} require access to environment ids.

\subsection{Leveraging Interventions}
We can also explicitly add an invariance regularization term to the reward and transition models using risk extrapolation~\cite{krueger2020rex} with the motivation of discovering stable features as means of generalization.
V-REx~\cite{krueger2020rex} enforces the risk $\mathcal{R}$ to be close across all tasks. For our setting, the risk can be aggregated over the latent transition model or reward model, or both, defined in \cref{eq:risk}:
\vspace{-5pt}
\begin{equation}
\label{eq:risk}
  \mathcal{R}_i(\phi) = MSE(R, \hat{R}) + MSE(P, \hat{P}).
\end{equation}
 \vspace{-5pt}
The V-REx objective is as follows,
\begin{align}
    \label{eqn:V-REx}
    \smash{\mathcal{R}_\textrm{V-REx}(\phi) \doteq \beta \mathrm{Var}(\{\mathcal{R}_1(\phi), ..., \mathcal{R}_m(\phi)\}) + \sum^m_{e=1} \mathcal{R}_e(\phi).}
\end{align}
Here $\beta \in [0, \infty)$ controls the balance between reducing average risk and enforcing equality of risks, with $\beta=0$ recovering ERM, and $\beta \rightarrow \infty$ leading V-REx to focus entirely on making the risks equal. The addition of this risk extrapolation objective to control for invariance we denote \ouracronym-REx. We note that this objective does require the multi-task setting with access to environment ids.

\cref{alg:main_alg} shows our method for learning invariances. \textit{Rendering interventions} are used to generate multiple environments, which we train over in a multi-task manner. Bisimulation metrics steps are in \textcolor{burntorange}{orange}, post-rendering intervention steps in \textcolor{ao(english)}{green}, and risk extrapolation in \textcolor{ballblue}{blue}. \cref{fig:architecture} (best viewed in colour) presents another view of our method, where we use a Siamese network to compare distances between randomly drawn states to train the encoder, then train soft actor-critic on the latent states.

\section{EXPERIMENTS \& RESULTS}
In this section, we evaluate \ouracronym on sim2sim and sim2real generalization tasks. Our experiments are designed to answer the following questions: 1) Can we get better generalization performance by staging a thorough set of interventions in simulation using data augmentation and domain randomization techniques? 2) Do ``unrealistic" modifications to the environment also help generalization to real settings? 3) Does leveraging invariances improve performance beyond just interventions? 

Our experiment setup is as follows: We use the open-source implementation of DrQ~\citep{kostrikov2020drq}, which implements Soft Actor-Critic with data augmentation including random crops and random shifts which were found to be  effective for locomotion-based RL tasks. We further modified the training environments to incorporate visual domain randomization of the background colour and top table texture and colour. We use DBC~\cite{zhang2020bisim} to find the coarsest bisimulation partition via bisimulation metrics, and the REx penalty term \cite{krueger2020rex}. In addition to the pixel observations, we also append low level states such as robot joint information, to the output of the observation encoder, to fully utilize all available information which would also be present at test time.
Our experiments show the effect of using this regularization term in the performance of the policy. All of our evaluations are performed on unseen environments, where the agent does not observe the selection of colour and textures at training time. Our experiments demonstrate that the exact choices of unseen textures or colours do not matter, as long as the important features for the task itself are consistently present both during training and test time. All other details such as background  should be ignored during training.


\subsection{Sawyer Manipulation Tasks}
We tested the following manipulation tasks with dense reward from the Metaworld Benchmark Suite \cite{yu2019metaworld} including \textit{Reacher\_v1}, \textit{Button\_Press\_v1}, \textit{Window\_Open\_v1} and \textit{Window\_Close\_v1}. We selected these environments in particular, as they offer challenging visual manipulation tasks.
Each of these tasks exhibit a single goal setting where the agent is expected to reach a target, or manipulate an object towards a goal configuration. 
Observations from example train and test environments can be found in \cref{fig:setup}. We see that the training environments are not ``realistic" in the sense that they do not attempt to be as close to the real setting as possible.
For baselines, we evaluate \ouracronym and \ouracronym-REx against methods that don't explicitly induce invariance but still utilize both rendering and post-rendering interventions (DrQ), and those that only utilize rendering interventions, i.e. just soft actor-critic in our multi-task setting (SAC).

\begin{figure*}
    \centering
    \includegraphics[width=0.7\linewidth]{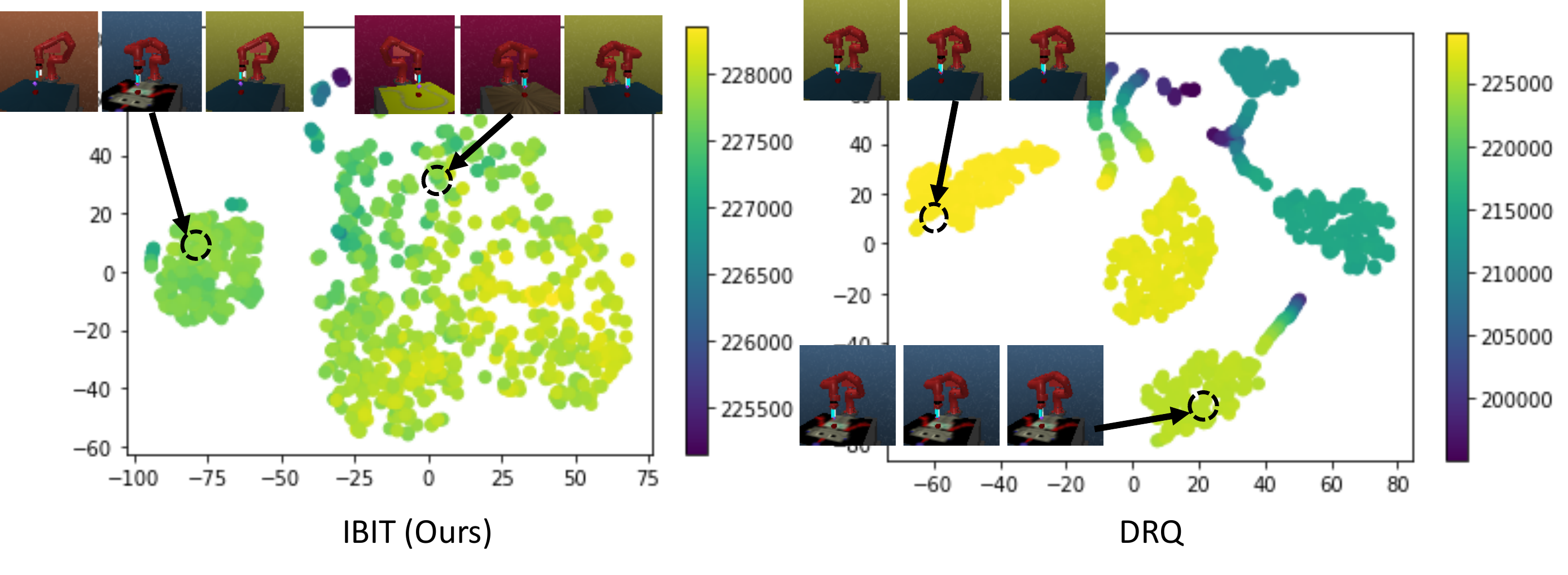}
    \vspace{-10pt}
\caption{\small t-SNE comparison of the latent representations learned by \ouracronym and DrQ. colours represent the predicted value by the critic. Our method successfully learns an invariant representation to domain randomization, while DrQ does not.}
\label{fig:tsne}
\end{figure*}

\begin{figure*}[h]
    \vspace{-10pt}
    \centering
    \includegraphics[width=.99\textwidth]{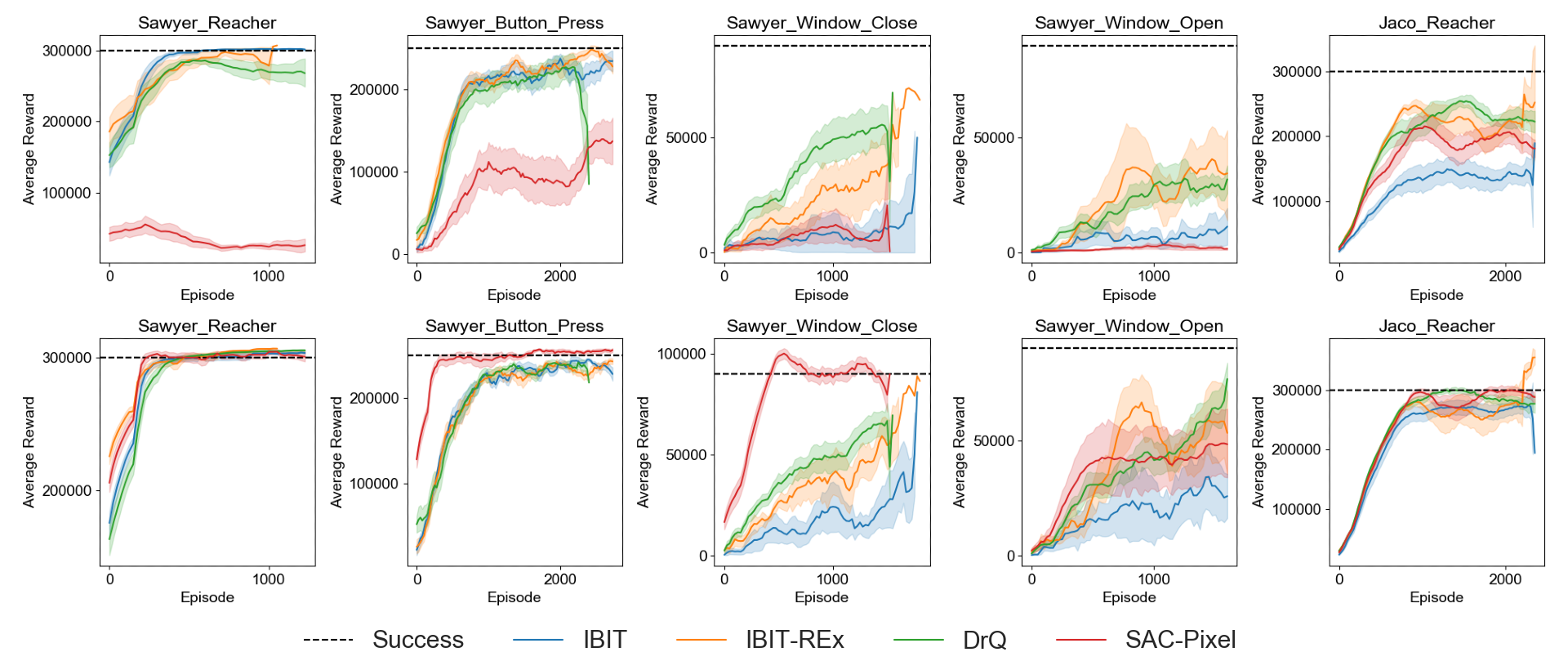}
    \vspace{-1pt}
    \caption{\small Comparison of methods on unseen environment (top) and seen environment (bottom).}
    \label{fig:unseen_eval}
    \vspace{-10pt}
\end{figure*}

\subsection{Jaco 7DoF Reacher}
We run both sim2sim and sim2real transfer for the Jaco arm reacher task. In contrast to the Sawyer tasks, where the agent is learning to control the end-effector gripper of the arm, to make the task more challenging, we allow the agent to control all the joints, which in turn, brings more degrees of freedom in terms of controlling the arm and it can be more realistic for certain manipulation tasks. Similar to the Sawyer experiments, we evaluate the sim2sim performance on a visually unseen environment.

For the main Sim2Real experiments, we train a 7DoF Jaco arm \cite{campeau2019kinova} in simulation. The simulated Jaco arm used within these environments is part of the DeepMind control suite \cite{Tunyasuvunakool2020}, with modifications made to visually randomize the scene.
For this task, the agent learns to controls joints velocities of the 7DoF robot arm. We assume dense reward during training and only sparse reward during testing on the real arm. Although this is a single goal reach task, the arm is reset to a random initial position at the beginning of each episode, making the task of generalization slightly harder as the agent needs to find a trajectory towards the target goal from any starting joint configuration. This implies that the arm starts in a new position which visually looks different. 

\subsection{Results}
\cref{fig:unseen_eval} shows results across all sim2sim tasks aggregated over $10$ seeds for both Sawyer and Jaco -- addressing questions 2) and 3) by confirming that invariance-seeking objectives do help generalization performance, and we can successfully generalize with unrealistic interventions. These results show performance on both seen (during training) and unseen environments. As expected, without the invariance objective, the policy overfits to the training environments and is unable to solve the task in the unseen case.
Our results show IBIT and IBIT-REx are both effective at generalizing, with their performance dependent on the task and the choices of hyperparameters per task. This is with the exception to the \textit{Sawyer\_window\_Open} task where all methods struggle to converge due to the difficulty of the task. We hypothesize that this task is visually more challenging compared to others and none of the methods are able to reliably generalize in the unseen environment case.
We also plot ablations of how well DBC as an invariance method generalizes to unseen environments under incomplete forms of interventions in \cref{fig:dr_da_ablation}, which answers question 1) that both types of interventions are helpful. Finally, \cref{fig:tsne} shows a t-SNE comparison of the latent representations learned in Sawyer \textit{Reacher\_v1} by IBIT and DrQ. We see that IBIT successfully learns invariances and plots equivalent states to the same area in latent space, even though they have large visual differences. DrQ learns a latent space that strongly correlates with estimated value, but does not learn to plot equivalent states under different interventions near each other.

The results of the sim2real arm experiments are demonstrated in \cref{tab:jacotrial}. We ran the best policy of each method $10$ times, and success was evaluated based on the arm successfully touching the target. The results demonstrate the robustness of each method in a real world setting where there are unseen variations of the visual representation that challenge the true generalization capability of each method.

\begin{figure}
    \centering
    \includegraphics[width=\linewidth]{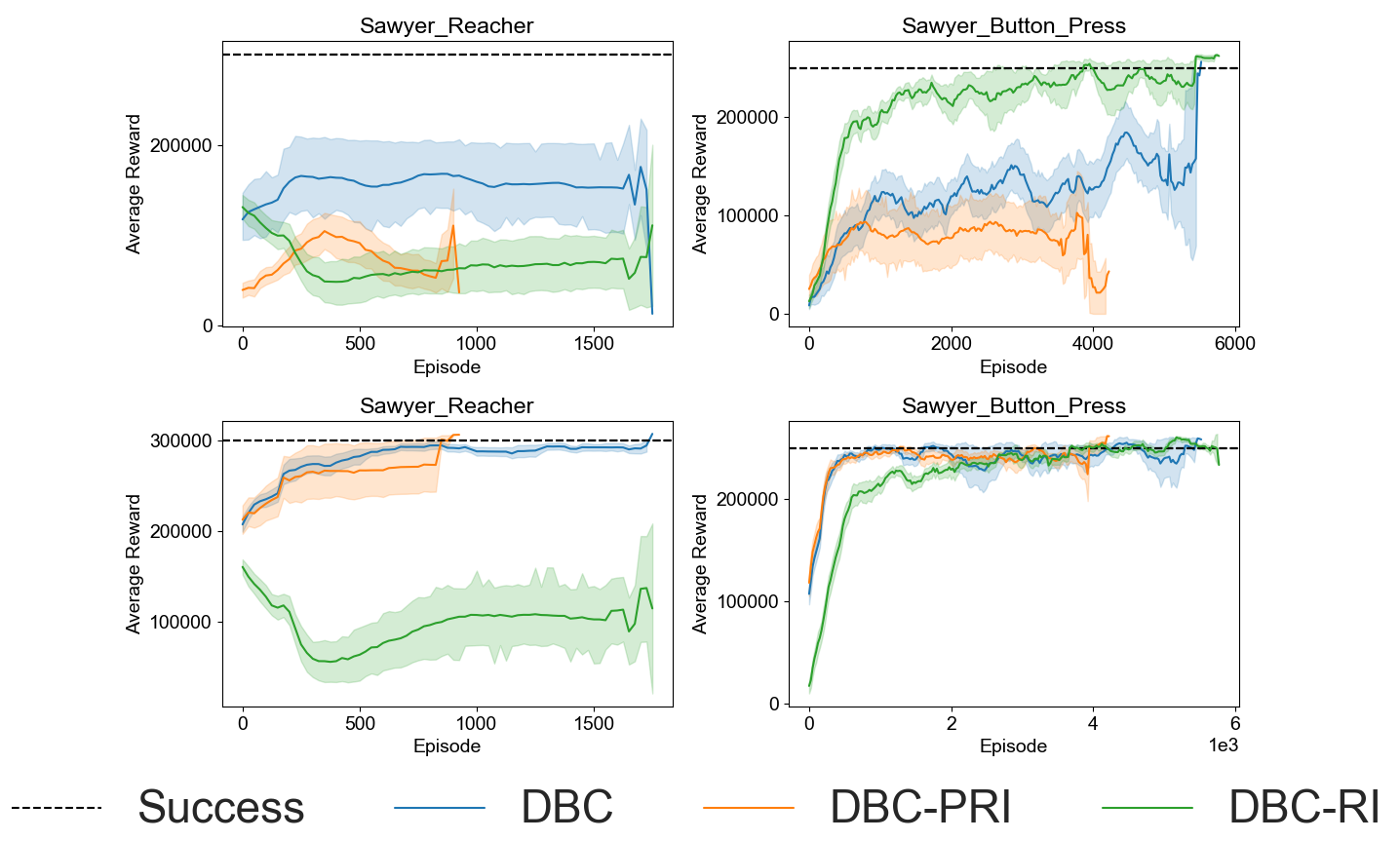}
    \vspace{-10pt}
    \caption{\small Comparison of methods with and without PRI (post-rendering interventions), and with and without RI (rendering interventions) on unseen (top) and seen environments (bottom).}
    \label{fig:dr_da_ablation}
\end{figure}

\newcolumntype{P}[1]{>{\centering\arraybackslash}p{#1}}
\begin{table}
\centering
\begin{tabular}{ |P{3cm}||P{3cm}|  }
 \hline
 \multicolumn{2}{|c|}{Jaco Trials} \\
 \hline
 Method & Success Rate \\
 \hline
 \ouracronym   & $\mathbf{8/10}$ \\
 \ouracronym-REx & $7/10$ \\
 DrQ & $5/10$ \\
 \hline
\end{tabular}
\caption{\label{tab:jacotrial} \small Real Jaco transfer success rate over $10$ trials.}
\end{table}

\section{RELATED WORK}
Prior work has been done in domain randomization for sim2real transfer, data augmentation for better generalization in RL settings, and invariance finding through the causal inference lens, which we detail below.

\citet{tobin2017domain} introduced \textbf{domain randomization} for \textbf{sim2real transfer}, which randomizes the rendering configuration in the simulator, and posited that with enough variability in the simulator, the real world could appear just as another variation. However, they did not specify what types of randomization are required to bridge the reality gap.
\citet{rusu2016sim} explore using the progressive network architecture to adapt a model that is pre-trained on simulated pixels.
\citet{rao2020rlcyclegan} proposes to automatically bridge the simulation-to-reality gap by employing generative models to translate simulated images into realistic ones. Such transformation is task-agnostic and the transformed images may not preserve all task-agnostic features. The authors propose a RL-scene consistency loss which ensures the image transformation is invariant with respect to the Q-values associated with the images. This enables learning task-aware transformations, but requires transferring simulated environments to match reality, which \ouracronym shows is not a necessary factor. 

\textbf{Data Augmentation} has gained recent popularity in self-supervised learning for improving generalization performance in supervised learning and reinforcement learning.
Several contrastive representation learning approaches \cite{srinivas2020curl, chen2020simple} utilize data augmentations and perform patch-wise or instance discrimination. The contrastive objective aims to maximize agreement between augmentations of the same image and minimize it between all other images in the dataset.
More recent work found that the contrastive objective is unnecessary, and even harmful, for representation learning in RL~\cite{laskin2020rad}.
\citet{kostrikov2020drq} propose augmentations and weighted Q-functions in conjunction with Soft Actor-Critic (SAC).
They utilize standard image transformations, specifically random shifts, to regularize the Q-function learned by the critic so that different shifts to the same input image have similar Q-function values.
Also inspired by the impact of data augmentation in computer vision \citet{laskin2020rad} propose RAD: Reinforcement Learning with Augmented Data, showing that data augmentations can enable simple RL algorithms to match and even outperform complex state-of-the-art methods such as \cite{lee2019stochastic,hafner2018learning,hafner2019dream} on the task of visual dynamics learning.
\citet{cobbe2018quantifying} and \citet{lee2020network} also show that simple data augmentation techniques such as cutout~\cite{cobbe2018quantifying} and random convolution~\cite{lee2020network} can be useful to improve generalization of agents on the OpenAI CoinRun and ProcGen benchmarks.

\citet{peters2016icp} first introduced \textbf{invariance as a proxy for causal mechanisms}, using hypothesis testing methods to find the set of variables that cause a target variable. Their method unfortunately relies on access to the true variables and is super-exponential in the number of variables, and therefore is not widely applicable in realistic settings. \citet{arjovsky2019irm} extend this idea to learning invariant representations for the multi-task supervised learning setting with a gradient penalty, and \citet{krueger2020rex} impose a similar constraint on invariance to risk across tasks to extrapolate to new, unseen tasks.
\citet{ilse2020designing} also draw parallels between domain generalization and causal inference, but only focus on data augmentation -- pointing out that certain types of data augmentation can harm downstream performance.
\citet{zhang2020invariant} also use an invariance objective in a multi-task setting for block MDPs, a family of tasks where only the rendering functions changes, e.g. camera angle or background. They assume these rendering functions are given and use a reconstruction-based loss to disentangle task-agnostic and task-specific representations. None of these methods utilize both rendering and post-rendering interventions or target the sim2real problem.

\section{DISCUSSION}
This paper presents Intervention-Based Invariant Transfer learning (\ouracronym): a method that encourages robustness to visual variations introduced at training time, which in turn, learns to attend to the true causal mechanisms of the underlying visual changes. Our experiments show that techniques such as Domain Randomization and Data Augmentation are forms of interventions --  namely rendering and post-rendering interventions -- that are effective for sim2real generalizability. In order to have guarantees on generalization capabilities, we propose to combine these interventions with an invariance objective that focuses on removing distractions in the background setting, enabling the agent to attend to the visual features relevant to solving the task at hand. However one difficulty we found with latent model-based methods, was the sensitivity of these methods to the choice of hyperparameters. We found that REx also suffers empirically from training volatility, likely caused by large penalties early on in training. 
Future work could explore avenues to improve stability in such invariance methods.
Working with the Metaworld tasks, we also found the multi-goal setting too challenging for an RL agent when dealing with rich observations. Further work could also explore approaches towards more robust multi-goal RL agents.


\section*{ACKNOWLEDGMENT}
We thank Johanna Hansen for the help with the real robot arm setup.
This work was supported by the Natural Sciences and Engineering Research Council (NSERC) of Canada.


\printbibliography

@misc{laskin2020rad,
    title={Reinforcement Learning with Augmented Data},
    author={Michael Laskin and Kimin Lee and Adam Stooke and Lerrel Pinto and Pieter Abbeel and Aravind Srinivas},
    year={2020},
    eprint={2004.14990},
    archivePrefix={arXiv},
    primaryClass={cs.LG}
}

@article{kostrikov2020drq,
    title={Image Augmentation Is All You Need: Regularizing Deep Reinforcement Learning from Pixels},
    author={Ilya Kostrikov and Denis Yarats and Rob Fergus},
    year={2020},
    eprint={2004.13649},
    archivePrefix={arXiv},
    primaryClass={cs.LG}
}

@inproceedings{yu2019metaworld,
  title={Meta-World: A Benchmark and Evaluation for Multi-Task and Meta Reinforcement Learning},
  author={Tianhe Yu and Deirdre Quillen and Zhanpeng He and Ryan Julian and Karol Hausman and Chelsea Finn and Sergey Levine},
  booktitle={Conference on Robot Learning (CoRL)},
  year={2019},
  eprint={1910.10897},
  archivePrefix={arXiv},
  primaryClass={cs.LG},
  url={https://arxiv.org/abs/1910.10897}
}

@article{rusu2016sim,
  title={Sim-to-real robot learning from pixels with progressive nets},
  author={Rusu, Andrei A and Vecerik, Mel and Roth{\"o}rl, Thomas and Heess, Nicolas and Pascanu, Razvan and Hadsell, Raia},
  journal={arXiv preprint arXiv:1610.04286},
  year={2016}
}

@misc{rao2020rlcyclegan,
    title={RL-CycleGAN: Reinforcement Learning Aware Simulation-To-Real},
    author={Kanishka Rao and Chris Harris and Alex Irpan and Sergey Levine and Julian Ibarz and Mohi Khansari},
    year={2020},
    eprint={2006.09001},
    archivePrefix={arXiv},
    primaryClass={cs.RO}
}

@inproceedings{tobin2017domain,
  title={Domain randomization for transferring deep neural networks from simulation to the real world},
  author={Tobin, Josh and Fong, Rachel and Ray, Alex and Schneider, Jonas and Zaremba, Wojciech and Abbeel, Pieter},
  booktitle={2017 IEEE/RSJ international conference on intelligent robots and systems (IROS)},
  pages={23--30},
  year={2017},
  organization={IEEE}
}

@article{srinivas2020curl,
  title={Curl: Contrastive unsupervised representations for reinforcement learning},
  author={Srinivas, Aravind and Laskin, Michael and Abbeel, Pieter},
  journal={arXiv preprint arXiv:2004.04136},
  year={2020}
}

@article{chen2020simple,
  title={A simple framework for contrastive learning of visual representations},
  author={Chen, Ting and Kornblith, Simon and Norouzi, Mohammad and Hinton, Geoffrey},
  journal={arXiv preprint arXiv:2002.05709},
  year={2020}
}

@inproceedings{lee2020network,
  title={Network randomization: A simple technique for generalization in deep reinforcement learning},
  author={Lee, Kimin and Lee, Kibok and Shin, Jinwoo and Lee, Honglak},
  booktitle={International Conference on Learning Representations. https://openreview. net/forum},
  year={2020}
}

@article{cobbe2018quantifying,
  title={Quantifying generalization in reinforcement learning},
  author={Cobbe, Karl and Klimov, Oleg and Hesse, Chris and Kim, Taehoon and Schulman, John},
  journal={arXiv preprint arXiv:1812.02341},
  year={2018}
}

@article{ilse2020designing,
  title={Designing Data Augmentation for Simulating Interventions},
  author={Ilse, Maximilian and Tomczak, Jakub M and Forr{\'e}, Patrick},
  journal={arXiv preprint arXiv:2005.01856},
  year={2020}
}

@incollection{NIPS1991_506,
    title = {Principles of Risk Minimization for Learning Theory},
    author = {Vapnik, V.},
    booktitle = {Advances in Neural Information Processing Systems 4},
    editor = {J. E. Moody and S. J. Hanson and R. P. Lippmann},
    pages = {831--838},
    year = {1992},
    publisher = {Morgan-Kaufmann},
    url = {http://papers.nips.cc/paper/506-principles-of-risk-minimization-for-learning-theory.pdf}
}

@misc{krueger2020rex,
    title={Out-of-Distribution Generalization via Risk Extrapolation (REx)},
    author={David Krueger and Ethan Caballero and Joern-Henrik Jacobsen and Amy Zhang and Jonathan Binas and Remi Le Priol and Aaron Courville},
    year={2020},
    eprint={2003.00688},
    archivePrefix={arXiv},
    primaryClass={cs.LG}
}

@ARTICLE{arjovsky2019irm,
   author = {{Arjovsky}, M. and {Bottou}, L. and {Gulrajani}, I. and {Lopez-Paz}, D.
	},
    title = "{Invariant Risk Minimization}",
  journal = {arXiv e-prints},
archivePrefix = "arXiv",
   eprint = {1907.02893},
 primaryClass = "stat.ML",
 keywords = {Statistics - Machine Learning, Computer Science - Artificial Intelligence, Computer Science - Machine Learning},
     year = 2019,
    month = jul,
   adsurl = {https://ui.adsabs.harvard.edu/abs/2019arXiv190702893A},
  adsnote = {Provided by the SAO/NASA Astrophysics Data System}
}

@incollection{volpi2018da,
title = {Generalizing to Unseen Domains via Adversarial Data Augmentation},
author = {Volpi, Riccardo and Namkoong, Hongseok and Sener, Ozan and Duchi, John C and Murino, Vittorio and Savarese, Silvio},
booktitle = {Advances in Neural Information Processing Systems 31},
editor = {S. Bengio and H. Wallach and H. Larochelle and K. Grauman and N. Cesa-Bianchi and R. Garnett},
pages = {5334--5344},
year = {2018},
publisher = {Curran Associates, Inc.},
url = {http://papers.nips.cc/paper/7779-generalizing-to-unseen-domains-via-adversarial-data-augmentation.pdf}
}

@inproceedings{zhang2020invariant,
    title={Invariant Causal Prediction for Block MDPs},
    author={Amy Zhang and Clare Lyle and Shagun Sodhani and Angelos Filos and Marta Kwiatkowska and Joelle Pineau and Yarin Gal and Doina Precup},
    year={2020},
    booktitle={International Conference on Machine Learning (ICML)},
}

@misc{zhang2020bisim,
    title={Learning Invariant Representations for Reinforcement Learning without Reconstruction},
    author={Amy Zhang and Rowan McAllister and Roberto Calandra and Yarin Gal and Sergey Levine},
    year={2020},
    eprint={2006.10742},
    archivePrefix={arXiv},
    primaryClass={cs.LG}
}

@article{peters2016icp,
author = {J. Peters and P. B\"uhlmann and N. Meinshausen},
year = {2016},
title = {Causal inference using invariant prediction: identification and confidence intervals},
volume = {78},
number = {5},
pages = {947--1012},
journal = {Journal of the Royal Statistical Society, Series B (with discussion)}
}

@book{pearl2009do,
 author = {Pearl, Judea},
 title = {Causality: Models, Reasoning and Inference},
 year = {2009},
 isbn = {052189560X, 9780521895606},
 edition = {2nd},
 publisher = {Cambridge University Press},
 address = {New York, NY, USA},
}

@article{lee2019stochastic,
  title={Stochastic latent actor-critic: Deep reinforcement learning with a latent variable model},
  author={Lee, Alex X and Nagabandi, Anusha and Abbeel, Pieter and Levine, Sergey},
  journal={arXiv preprint arXiv:1907.00953},
  year={2019}
}

@article{hafner2018learning,
  title={Learning latent dynamics for planning from pixels},
  author={Hafner, Danijar and Lillicrap, Timothy and Fischer, Ian and Villegas, Ruben and Ha, David and Lee, Honglak and Davidson, James},
  journal={arXiv preprint arXiv:1811.04551},
  year={2018}
}

@article{hafner2019dream,
  title={Dream to control: Learning behaviors by latent imagination},
  author={Hafner, Danijar and Lillicrap, Timothy and Ba, Jimmy and Norouzi, Mohammad},
  journal={arXiv preprint arXiv:1912.01603},
  year={2019}
}

@article{azhang2018natrl,
  author    = {Amy Zhang and
               Yuxin Wu and
               Joelle Pineau},
  title     = {Natural Environment Benchmarks for Reinforcement Learning},
  journal   = {Computing Research Repository (CoRR)},
  volume    = {abs/1811.06032},
  year      = {2018},
  url       = {http://arxiv.org/abs/1811.06032},
  archivePrefix = {arXiv},
  eprint    = {1811.06032},
  timestamp = {Sat, 24 Nov 2018 17:52:00 +0100},
  biburl    = {https://dblp.org/rec/bib/journals/corr/abs-1811-06032},
  bibsource = {dblp computer science bibliography, https://dblp.org}
}

@article{ferns2011contbisim,
author = {Ferns, Norm and Panangaden, Prakash and Precup, Doina},
title = {Bisimulation Metrics for Continuous {Markov} Decision Processes},
year = {2011},
issue_date = {November 2011},
volume = {40},
number = {6},
issn = {0097-5397},
url = {https://doi.org/10.1137/10080484X},
doi = {10.1137/10080484X},
journal = {Society for Industrial and Applied Mathematics},
month = dec,
pages = {1662–1714},
numpages = {53},
}

@inproceedings{ferns2014bisim_metrics,
  title={Bisimulation Metrics are Optimal Value Functions.},
  author={Ferns, Norman and Precup, Doina},
  booktitle={Uncertainty in Artificial Intelligence (UAI)},
  pages={210--219},
  year={2014},
}

@inproceedings{li2006stateabs,
  title={Towards a Unified Theory of State Abstraction for {MDPs}},
  author={Li, Lihong and Walsh, Thomas J and Littman, Michael L},
  booktitle={ISAIM},
  year={2006}
}

@inproceedings{larsen1989bisim,
author = {Larsen, K. G. and Skou, A.},
title = {Bisimulation through Probabilistic Testing (Preliminary Report)},
year = {1989},
isbn = {0897912942},
publisher = {Association for Computing Machinery},
url = {https://doi.org/10.1145/75277.75307},
doi = {10.1145/75277.75307},
booktitle = {Symposium on Principles of Programming Languages},
pages = {344–352},
numpages = {9},
}

@article{Givan2003EquivalenceNA,
  title={Equivalence notions and model minimization in {Markov} decision processes},
  author={Robert Givan and Thomas L. Dean and Matthew Greig},
  journal={Artificial Intelligence},
  year={2003},
  volume={147},
  pages={163-223}
}

@book{optimaltransport,
  title={Topics in optimal transportation},
  author={Villani, C{\'e}dric},
  year={2003},
  month = {01},
  publisher={American Mathematical Society}
}

@inproceedings{vanbreugel2001QuantitativeVerificationProbabilistic,
	title = {Towards Quantitative Verification of Probabilistic Transition Systems},
	isbn = {978-3-540-48224-6},
	doi = {10.1007/3-540-48224-5_35},
	language = {en},
	booktitle = {Automata, {Languages} and {Programming}},
	publisher = {Springer},
	author = {van Breugel, Franck and Worrell, James},
	editor = {Orejas, Fernando and Spirakis, Paul G. and van Leeuwen, Jan},
	year = {2001},
	pages = {421--432}
}

@article{Tunyasuvunakool2020,
   title={dm\_control: Software and tasks for continuous control},
   volume={6},
   ISSN={2665-9638},
   url={http://dx.doi.org/10.1016/j.simpa.2020.100022},
   DOI={10.1016/j.simpa.2020.100022},
   journal={Software Impacts},
   publisher={Elsevier BV},
   author={Tunyasuvunakool, Saran and Muldal, Alistair and Doron, Yotam and Liu, Siqi and Bohez, Steven and Merel, Josh and Erez, Tom and Lillicrap, Timothy and Heess, Nicolas and Tassa, Yuval},
   year={2020},
   month={Nov},
   pages={100022}
}

@incollection{campeau2019kinova,
  title={Kinova modular robot arms for service robotics applications},
  author={Campeau-Lecours, Alexandre and Lamontagne, Hugo and Latour, Simon and Fauteux, Philippe and Maheu, V{\'e}ronique and Boucher, Fran{\c{c}}ois and Deguire, Charles and L'Ecuyer, Louis-Joseph Caron},
  booktitle={Rapid Automation: Concepts, Methodologies, Tools, and Applications},
  pages={693--719},
  year={2019},
  publisher={IGI Global}
}

\pagebreak
\clearpage
\section*{APPENDIX}
\subsection{Implementation Details}
\paragraph{Learning a bisimulation metric}
Defining a distance $d$ between states requires defining both a distance between rewards (to soften \cref{eq:bisim-discrete-r}), 
and distance between state distributions (to soften \cref{eq:bisim-discrete-p}). Prior works use the Wasserstein metric for the latter, originally used in the context of bisimulation metrics by \citet{vanbreugel2001QuantitativeVerificationProbabilistic}.
The $p^{\text{th}}$ Wasserstein metric is defined between two probability distributions $\mathcal{P}_i$ and $\mathcal{P}_j$ as
\begin{equation}
W_p(\mathcal{P}_i,\mathcal{P}_j;d)=(\inf_{\gamma'\in\Gamma(\mathcal{P}_i,\mathcal{P}_j)}\int_{\mathcal{S}\times \mathcal{S}} d(s_i,s_j)^p\, \textnormal{d}\gamma'(s_i,s_j))^{1/p}
\end{equation}

where $\Gamma(\mathcal{P}_i,\mathcal{P}_j)$ is the set of all couplings of $\mathcal{P}_i$ and $\mathcal{P}_j$.
This is known as the ``earth mover'' distance, denoting the cost of transporting mass from one distribution to another~\cite{optimaltransport}.


\subsection{Hyperparameter Details}
The following parameters are the best performing parameters for \ouracronym. The training time and episode lengths depend on the environment and the task at hand. Table \ref{tab:hyperparam} shows the complete set of parameters and possible ranges for some parameters we experimented with, and the rest show specific parameters per task.

\begin{center}
\begin{table}[h]
\centering
\begin{tabular}{|l|l|}
  \hline
  Parameter name & Value \\ 
  \hline
  Replay buffer capacity & 100000 \\
  Batch size & [32-256] \\
  Image frame size & $ 84 \times 84 $ \\
  Image frame stack & 5 \\
  Discount $\gamma$ & 0.99 \\
  Optimizer & Adam \\
  Critic learning rate & $[1e^{-3} - 1e^{-5}]$  \\
  Critic target update frequency & [1-4] \\
  Critic Q-function soft-update rate $\tau_Q$ & 0.005 \\
  Critic encoder soft-update rate $\tau_Q$ & 0.005 \\
  Actor learning rate &  $[1e^{-3} - 1e^{-5}]$ \\
  Actor target update frequency & [2-4] \\
  Actor log stddev bounds & [$-5$,2] \\
  Encoder learning rate & $10^{-5}$ \\
  Temperature learning rate & $10^{-4}$ \\
  Temperature Adam’s $\beta_1$ & $0.9$ \\
  Init temperature & $0.1$ \\
  Penalty weight & $[0.1 - 1.0]$ \\
  Penalty anneal iterations & $[8000 - 20000]$ \\
  Training env batch size & 5 \\
  Training env resampling rate & $[150 - 3000]$ \\
  \hline
\end{tabular}
\centering
\caption{\label{tab:hyperparam} \small A complete overview of common hyper parameters for Sawyer and Jaco environments.}
\end{table}
\end{center}

\begin{center}
\begin{table}[h]
\centering
\begin{tabular}{|l|l|}
  \hline
  Parameter name & Value \\ 
  \hline
  Batch size & 256 \\
  Critic learning rate & $0.005$  \\
  Critic target update frequency & 2 \\
  Actor learning rate &  $0.005$ \\
  Actor target update frequency & 2 \\
  Encoder learning rate & $0.005$ \\
  Penalty weight & $0.5$ \\
  Penalty anneal iterations & $8000$ \\
  Training env batch size & 5 \\
  Training env resampling rate & $150$ \\
  \hline
\end{tabular}
\centering
\caption{\label{tab:hyperparam} \small Environment specific hyper parameters for Sawyer Reach environment.}
\end{table}
\end{center}

\begin{center}
\begin{table}[h]
\centering
\begin{tabular}{|l|l|}
  \hline
  Parameter name & Value \\ 
  \hline
  Batch size & 32 \\
  Critic learning rate & $0.001$  \\
  Critic target update frequency & 1 \\
  Actor learning rate &  $0.001$ \\
  Actor target update frequency & 4 \\
  Encoder learning rate & $0.001$ \\
  Penalty weight & $0.13$ \\
  Penalty anneal iterations & $6000$ \\
  Training env batch size & 5 \\
  Training env resampling rate & $300$ \\
  \hline
\end{tabular}
\centering
\caption{\label{tab:hyperparam} \small Environment specific hyper parameters for DMSuite Jaco Reach environment.}
\end{table}
\end{center}

\begin{center}
\begin{table}[h]
\centering
\begin{tabular}{|l|l|}
  \hline
  Parameter name & Value \\ 
  \hline
  Batch size & 64 \\
  Critic learning rate & $0.001$  \\
  Critic target update frequency & 2 \\
  Actor learning rate &  $0.001$ \\
  Actor target update frequency & 2 \\
  Encoder learning rate & $0.001$ \\
  Penalty weight & $0.5$ \\
  Penalty anneal iterations & $10000$ \\
  Training env batch size & 5 \\
  Training env resampling rate & $3000$ \\
  \hline
\end{tabular}
\centering
\caption{\label{tab:hyperparam} \small Environment specific hyper parameters for Sawyer Window Open environment.}
\end{table}
\end{center}

\begin{center}
\begin{table}[h]
\centering
\begin{tabular}{|l|l|}
  \hline
  Parameter name & Value \\ 
  \hline
  Batch size & 64 \\
  Critic learning rate & $0.002$  \\
  Critic target update frequency & 2 \\
  Actor learning rate &  $0.002$ \\
  Actor target update frequency & 2 \\
  Encoder learning rate & $0.002$ \\
  Penalty weight & $0.8$ \\
  Penalty anneal iterations & $10000$ \\
  Training env batch size & 5 \\
  Training env resampling rate & $3000$ \\
  \hline
\end{tabular}
\centering
\caption{\label{tab:hyperparam} \small Environment specific hyper parameters for Sawyer Window Close environment.}
\end{table}
\end{center}

\begin{center}
\begin{table}[h]
\centering
\begin{tabular}{|l|l|}
  \hline
  Parameter name & Value \\ 
  \hline
  Batch size & 32 \\
  Critic learning rate & $0.001$  \\
  Critic target update frequency & 2 \\
  Actor learning rate &  $0.001$ \\
  Actor target update frequency & 2 \\
  Encoder learning rate & $0.001$ \\
  Penalty weight & $0.2$ \\
  Penalty anneal iterations & $3000$ \\
  Training env batch size & 5 \\
  Training env resampling rate & $150$ \\
  \hline
\end{tabular}
\centering
\caption{\label{tab:hyperparam} \small Environment specific hyper parameters for Sawyer Button Press environment.}
\end{table}
\end{center}

\end{document}